\newtheorem{theorem}{Theorem}
\title{On Theorem~2.3 in ``Prediction, Learning, and Games''
by Cesa-Bianchi and Lugosi. 
}
\author{Alexey Chernov\thanks{
Supported by EPSRC grant EP/F002998/1.}\\[1ex]
\normalsize  Computer Learning Research Centre and 
Dept Computer Science\\
\normalsize  Royal Holloway, University of London,
\normalsize  Egham, Surrey TW20 0EX, UK\\
\normalsize\texttt{chernov@cs.rhul.ac.uk}
}
\date{}
\begin{document}
\maketitle

This note proves a loss bound for the exponentially weighted average
forecaster with time-varying potential,
see~\cite[\S~2.3]{CesaBianchiLugosi:2006} for context and definitions.
The present proof gives a better constant in the regret term
than Theorem~2.3 in~\cite{CesaBianchiLugosi:2006}.
This proof first appeared in~\cite{CZ2010} (Theorem~2),
where a more general algorithm is considered.
Here the proof is rewritten using 
the notation of~\cite{CesaBianchiLugosi:2006}.

\begin{theorem}\label{thm:bound}
Assume that the loss function $\ell$ is convex in the first argument
and $\ell(p,y)\in[0,1]$ for all $p\in\mathcal{D}$ and $y\in\mathcal{Y}$.
For any positive reals $\eta_1\ge\eta_2\ge\ldots$,
for any $n\ge 1$ and for any $y_1,\ldots,y_n\in\mathcal{Y}$,
the regret of the exponentially weighted average
forecaster with time-varying learning rate~$\eta_t$
satisfies
\begin{equation}\label{eq:bound}
\widehat{L}_n - \min_{i=1,\ldots,N} L_{i,n}
\le
\frac{\ln N}{\eta_n} +
   \frac{1}{8}\sum_{t=1}^n\eta_t\,.
\end{equation}
In particular, for $\eta_t=\sqrt{\frac{4\ln N}{t}}$, $t=1,\ldots,n$,
we have
$$
\widehat{L}_n - \min_{i=1,\ldots,N} L_{i,n}
\le
\sqrt{n\ln N}\,.
$$
\end{theorem}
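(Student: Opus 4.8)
The plan is to follow the standard potential-function analysis for exponentially weighted averaging, with a careful regrouping of the telescoping sum to absorb the time-varying learning rate. For each round $t$ let $w_{i,t}=e^{-\eta_t L_{i,t-1}}$ be the weight assigned to expert $i$, so that the forecaster predicts the weighted average $\widehat{p}_t=\sum_i w_{i,t}f_{i,t}/\sum_i w_{i,t}$, where $f_{i,t}$ denotes expert $i$'s prediction and $p_{i,t}$ the normalized weights. The first step is a per-round bound. The ratio $\sum_i e^{-\eta_t L_{i,t}}/\sum_i e^{-\eta_t L_{i,t-1}}$ equals the expectation under $p_t$ of $e^{-\eta_t\ell(f_{i,t},y_t)}$, so applying Hoeffding's lemma to the bounded variable $\ell(f_{i,t},y_t)\in[0,1]$ gives the cumulant bound, and convexity of $\ell$ together with Jensen's inequality gives $\ell(\widehat{p}_t,y_t)\le\sum_i p_{i,t}\,\ell(f_{i,t},y_t)$. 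Combining these and dividing by $\eta_t$ produces
$$
\ell(\widehat{p}_t,y_t)\le\frac{1}{\eta_t}\ln\frac{\sum_i e^{-\eta_t L_{i,t-1}}}{\sum_i e^{-\eta_t L_{i,t}}}+\frac{\eta_t}{8}\,,
$$
where the constant $1/8$ is exactly the Hoeffding range factor $(b-a)^2/8$ with $b-a=1$.

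Next I would introduce the potential $\Phi_t(\eta)=\frac{1}{\eta}\ln\bigl(\frac{1}{N}\sum_i e^{-\eta L_{i,t}}\bigr)$, so that summing the per-round bound over $t=1,\dots,n$ yields $\widehat{L}_n\le\sum_{t=1}^n\bigl(\Phi_{t-1}(\eta_t)-\Phi_t(\eta_t)\bigr)+\frac{1}{8}\sum_{t=1}^n\eta_t$. Because the learning rate changes between consecutive rounds, this sum does not telescope directly; the essential trick is to regroup it as
$$
\Phi_0(\eta_1)-\Phi_n(\eta_n)+\sum_{t=1}^{n-1}\bigl(\Phi_t(\eta_{t+1})-\Phi_t(\eta_t)\bigr)\,.
$$
The boundary terms are straightforward: $\Phi_0(\eta_1)=0$ since all cumulative losses vanish at $t=0$, and bounding $\frac{1}{N}\sum_i e^{-\eta_n L_{i,n}}$ from below by its largest summand $\frac{1}{N}e^{-\eta_n\min_i L_{i,n}}$ gives $-\Phi_n(\eta_n)\le\frac{\ln N}{\eta_n}+\min_i L_{i,n}$.

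The main obstacle, which I expect to be the crux of the argument, is controlling the correction sum $\sum_{t=1}^{n-1}\bigl(\Phi_t(\eta_{t+1})-\Phi_t(\eta_t)\bigr)$. The key observation is that $g(\eta):=\ln\bigl(\frac{1}{N}\sum_i e^{-\eta L_{i,t}}\bigr)$ is convex in $\eta$, since its second derivative is the variance of the losses under the corresponding Gibbs distribution, and it satisfies $g(0)=0$; hence the secant slope $\Phi_t(\eta)=g(\eta)/\eta$ is nondecreasing in $\eta$. Because the learning rates are nonincreasing, $\eta_{t+1}\le\eta_t$ forces $\Phi_t(\eta_{t+1})\le\Phi_t(\eta_t)$, so every term of the correction sum is nonpositive and the whole sum can be discarded. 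Assembling the three pieces yields exactly~\eqref{eq:bound}. Finally, for the stated choice $\eta_t=\sqrt{4\ln N/t}$ I would compute $\frac{\ln N}{\eta_n}=\tfrac12\sqrt{n\ln N}$ and bound $\sum_{t=1}^n\eta_t=2\sqrt{\ln N}\sum_{t=1}^n t^{-1/2}\le 4\sqrt{n\ln N}$ using $\sum_{t=1}^n t^{-1/2}\le\int_0^n s^{-1/2}\,ds=2\sqrt{n}$, so the right-hand side collapses to $\tfrac12\sqrt{n\ln N}+\tfrac12\sqrt{n\ln N}=\sqrt{n\ln N}$.
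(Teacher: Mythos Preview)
Your proof is correct and yields the sharp bound~\eqref{eq:bound}. It is organised differently from the paper's argument: you work with the potential $\Phi_t(\eta)=\eta^{-1}\ln\bigl(\tfrac{1}{N}\sum_i e^{-\eta L_{i,t}}\bigr)$, regroup the sum of per-round increments into a telescope plus correction terms $\Phi_t(\eta_{t+1})-\Phi_t(\eta_t)$, and discard the latter via monotonicity of $\eta\mapsto\Phi_t(\eta)$; the paper instead maintains the inductive invariant $\tfrac{1}{N}\sum_j s_{j,t}\le 1$ for $s_{j,t}=\exp\bigl(-\eta_t L_{j,t}+\eta_t\widehat L_t-\tfrac{\eta_t}{8}\sum_{k\le t}\eta_k\bigr)$, passing from $t-1$ to $t$ by the concavity of $x\mapsto x^{\eta_t/\eta_{t-1}}$. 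The decisive inequality is in fact the same in both proofs: your convexity-of-log-sum-exp statement $g(\alpha\eta)\le\alpha g(\eta)$ (with $g(0)=0$) is exactly the power-mean inequality $\tfrac{1}{N}\sum_j a_j^{\alpha}\le\bigl(\tfrac{1}{N}\sum_j a_j\bigr)^{\alpha}$ applied to $a_j=e^{-\eta L_{j,t}}$, which is what the paper invokes. Your telescoping presentation has the merit of making explicit where the original Cesa-Bianchi--Lugosi proof loses the extra $(\eta_n^{-1}-\eta_1^{-1})\ln N$: it bounds each correction term $\Phi_t(\eta_{t+1})-\Phi_t(\eta_t)$ by a positive quantity rather than observing it is nonpositive. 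The paper's invariant-based packaging, on the other hand, never needs to write out and then regroup a non-telescoping sum.
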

\begin{proof}
The forecaster at step $t$
predicts $\widehat{p}_t=\sum_{i=1}^N \frac{w_{i,t-1}}{W_{t-1}}f_{i,t}$,
where $w_{i,t-1}=e^{-\eta_t L_{i,t-1}}$ and $W_{t-1} = \sum_{j=1}^N w_{j,t-1}$.
Due to convexity of $\ell$ we have
$$
\ell(\widehat{p}_t,y_t) \le \sum_{i=1}^N \frac{w_{i,t-1}}{W_{t-1}}\;\ell(f_{i,t},y_t)\,.
$$
Using the Hoeffding inequality (\cite[Lemma~A.1]{CesaBianchiLugosi:2006}), 
we get
$$
e^{-\eta_t \sum_{i=1}^N \frac{w_{i,t-1}}{W_{t-1}}\;\ell(f_{i,t},y_t)}
\ge
\sum_{i=1}^N \frac{w_{i,t-1}}{W_{t-1}}e^{-\eta_t\ell(f_{i,t},y_t) - \eta_t^2/8}
$$
and thus
\begin{equation}\label{eq:convex+Hoeffding}
e^{-\eta_t \ell(\widehat{p}_t,y_t)}
\ge
\sum_{i=1}^N \frac{w_{i,t-1}}{W_{t-1}}e^{-\eta_t\ell(f_{i,t},y_t) - \eta_t^2/8}\,.
\end{equation}

Consider the values
$$
s_{i,t-1} = e^{-\eta_{t-1}L_{i,t-1} + 
                \eta_{t-1}\widehat{L}_{t-1} - \frac{1}{8}\eta_{t-1}\sum_{k=1}^{t-1}\eta_k
              }
$$
and note that
\begin{equation}\label{eq:equalratios}
\frac{w_{i,t-1}}{W_{t-1}} = \frac{\frac{1}{N}(s_{i,t-1})^{\frac{\eta_t}{\eta_{t-1}}}}
                            {\sum_{j=1}^N \frac{1}{N}(s_{j,t-1})^{\frac{\eta_t}{\eta_{t-1}}}}\,.
\end{equation}

Let us show that $\sum_{j=1}^N\frac{1}{N}s_{j,t}\le 1$ 
by induction over $t$.
For $t=0$ this is trivial, since $s_{j,0}=1$ for all $j$.
Assume that $\sum_{j=1}^N\frac{1}{N}s_{j,t-1}\le 1$.
Then 
\begin{equation}\label{eq:denombound}
\sum_{j=1}^N \frac{1}{N}(s_{j,t-1})^{\frac{\eta_t}{\eta_{t-1}}}
\le
\left(\sum_{j=1}^N \frac{1}{N}s_{j,t-1}\right)^{\frac{\eta_t}{\eta_{t-1}}}
\le 1\,,
\end{equation}
since the function $x\mapsto x^\alpha$ is concave and monotone for $x\ge 0$ 
and $\alpha\in[0,1]$ and since $\eta_{t-1}\ge\eta_t>0$.
Using~\eqref{eq:denombound} to bound the right-hand side 
of~\eqref{eq:equalratios}, we get
$\frac{w_{i,t-1}}{W_{t-1}}\ge \frac{1}{N}(s_{i,t-1})^{\frac{\eta_t}{\eta_{t-1}}}$;
and combining with~\eqref{eq:convex+Hoeffding}, we get
$$
e^{-\eta_t \ell(\widehat{p}_t,y_t)}
\ge
\sum_{i=1}^N \frac{1}{N}(s_{i,t-1})^{\frac{\eta_t}{\eta_{t-1}}}e^{-\eta_t\ell(f_{i,t},y_t) - \eta_t^2/8}\,.
$$
It remains to note that 
$$
s_{i,t} = (s_{i,t-1})^{\frac{\eta_t}{\eta_{t-1}}}
     e^{-\eta_t\ell(f_{i,t},y_t) + \eta_t \ell(\widehat{p}_t,y_t) - \eta_t^2/8}
$$
and we get $\sum_{i=1}^N\frac{1}{N}s_{i,t}\le 1$.

For any $i$, we have
$\frac{1}{N}s_{i,n}\le\sum_{j=1}^N\frac{1}{N}s_{j,n}\le 1$,
thus
$$
-\eta_{n}L_{i,n} + \eta_{n}\widehat{L}_{n} 
- \frac{1}{8}\eta_{n}\sum_{k=1}^{n}\eta_k
\le \ln N\,,
$$
and~\eqref{eq:bound} follows.
\end{proof}

Theorem~\ref{thm:bound} recommends the learning rate 
$\eta_t=\sqrt{(4\ln N)/ t}$
instead of $\sqrt{(8\ln N)/t}$ used 
in Theorem~2.3 in~\cite{CesaBianchiLugosi:2006}
and achieves the regret term $\sqrt{n\ln N}$
instead of $\sqrt{2n\ln N} + \sqrt{0.125\ln N}$.

To compare the bounds for arbitrary learning rates,
let us observe that 
the proof of Theorem~2.3 in~\cite{CesaBianchiLugosi:2006}
actually implies
(under the assumptions of Theorem~\ref{thm:bound}):
$$
\widehat{L}_n - \min_{i=1,\ldots,N} L_{i,n}
\le
\left(\frac{2}{\eta_n} - \frac{1}{\eta_1}\right)\ln N +
   \frac{1}{8}\sum_{t=1}^n\eta_t\,.
$$
The right-hand side of this inequality
is larger than the right-hand side of~\eqref{eq:bound} 
if $\eta_n\ne\eta_1$.
If $\eta_t$ are equal for all $t$,
the bounds coincide and give 
the bound of Theorem~2.2 in~\cite{CesaBianchiLugosi:2006}.

\end{document}